\def\hb{\hbox to 10.7 cm{}}
\newtheorem{definition}{Definition}
\newtheorem{proof}{Proof}
\newtheorem{theorem}{Theorem}
\newtheorem{lemma}[theorem]{Lemma}
\newcommand{\repeattheorem}[1]{%
	\begingroup
	\renewcommand{\thetheorem}{\ref{#1}}%
	\expandafter\expandafter\expandafter\theorem
	\csname reptheorem@#1\endcsname
	\endtheorem
	\endgroup
}
\xdef\csname reptheorem@#1\endcsname{%
		\unexpanded\expandafter{\BODY}%
	}%
\unskip\label{#1}\endtheorem
\setlist[itemize]{noitemsep, topsep=1mm}
\algnewcommand\algorithmicinput{\textbf{Input:}}
\algnewcommand\Input{\item[\algorithmicinput]}
\newcommand{\cmark}{\text{\ding{51}}}
\newcommand{\xmark}{\text{\ding{55}}}
\newcommand{\verytiny}{\fontsize{5}{6}\selectfont}
\newcommand{\ora}{\overrightarrow}
\newcommand{\WRP}{\par\qquad\(\hookrightarrow\)\enspace}
\newcounter{number}
\newcounter{rowcounter}
\newcommand*{\thevalue}{}
\newcommand*{\csvEight}[1]{%
	\setcounter{number}{0}
	\addtocounter{rowcounter}{1}
	\forloop[2]{number}{0}{\value{number} < 8}{%
		&\verytiny
		\ifthenelse{\isodd{\value{rowcounter}}}{\cellcolor{gray!25}}{\cellcolor{white}}
		\DTLgetvalue{\thevalue}{\dtllastloadeddb}{\the\numexpr #1 + \value{number}}{2}\thevalue \ifthenelse{\equal{\string\detokenize{\thevalue}}{\string\detokenize{N/A}}}{\hspace{-1.9mm}}{ms} 
		\newline \DTLgetvalue{\thevalue}{\dtllastloadeddb}{\the\numexpr #1 + \value{number} + 1}{3}\thevalue \ifthenelse{\equal{\string\detokenize{\thevalue}}{\string\detokenize{N/A}}}{}{MB}}
}
\newcolumntype{H}{>{\setbox0=\hbox\bgroup}c<{\egroup}@{}}
\begin{document}

\pagestyle{headings}
\def\thepage{}

\begin{frontmatter}              


	\title{Efficiently Checking Actual Causality \\with SAT Solving\thanks{Work supported by the German National Science Foundation DFG under grant no. PR1266/3-1, Design Paradigms for Societal-
Scale Cyber-Physical Systems.}}
	
	\markboth{}{Preprint\hb}

	
	\author[A]{\fnms{Amjad} \snm{Ibrahim}%
		},
	\author[A]{\fnms{Simon} \snm{Rehwald}}
	and
	\author[A]{\fnms{Alexander} \snm{Pretschner}}
	
	\runningauthor{A. Ibrahim et al.}
	\address[A]{Department of Informatics, Technical University of Munich, Germany}
	
\begin{abstract}
	Recent formal approaches towards causality have made the concept ready for incorporation into the technical world. However, causality reasoning is computationally hard; and no general algorithmic approach exists that efficiently infers the causes for effects. Thus, checking causality in the context of complex, multi-agent, and distributed socio-technical systems is a significant challenge. Therefore, we conceptualize an intelligent and novel algorithmic approach towards checking causality in acyclic causal models with binary variables, utilizing the optimization power in the solvers of the Boolean Satisfiability Problem (SAT). We present two SAT encodings, and an empirical evaluation of their efficiency and scalability. We show that  causality is computed efficiently in less than 5 seconds for  models that consist of more than 4000 variables. 

\begin{keyword}
	accountability\sep actual causality\sep
	sat solving\sep reasoning
\end{keyword}
\end{abstract}
\end{frontmatter}
\markboth{Preprint\hb}{Preprint\hb}
\section{Introduction}\label{sec:introduction}
\subsection{Accountability}
The tremendous power of modern information technology, and cyber-physical, systems is rooted, among other things, in the possibility to easily compose them using software. The ease of composition is due to the virtual nature of software: if a system provides an application programming interface, then other systems can in principle directly interact with that system. As we will argue, it is inherently impractical to specify all ``legal,'' or ``safe,'' or ``secure'' interactions with a system. In turn, this means that the possibility of illegal, unsafe or insecure interactions cannot be excluded at design time. As a consequence, we cannot ensure the ``adequate'' functioning of such open systems; hence need to be prepared for failures of the system; and therefore need \emph{accountability mechanisms} that help us identify the root cause, or responsibility, for such a failure. The technical contribution of this article is an efficient procedure for checking a specific kind of causality.  

Traditionally, safety and also security analysis of critical systems start by determining the system's boundary. We will concentrate on the technical boundary here. Any accessible software interface, or API, then is part of that boundary. In practice, these interface descriptions are syntactic in nature.
Among other things, this means that they abstract from restrictions on the usage of the API. For instance, a collection of data may need to be initialized before it is used, which is a requirement that is not part of the syntactic interface but may require additional specification in the form of sequencing requirements.

This example generalizes: Many restrictions on the potential usage of an API are left implicit, or come as comments in the best case. The academic community has therefore suggested, for a long time, to provide more detailed interface descriptions. A typical example for such interfaces are \emph{contracts} \cite{DBLP:journals/computer/Meyer92} that require the specification of preconditions and postconditions for the usage of a specific piece of functionality.

In practice, today's software interfaces continue to be mostly syntactic, in spite of decades of research and impressive results on richer notions of interfaces that also incorporate (more detailed descriptions of) the behavior of a component. We prefer not to speculate about the lack of adoption of these ideas in practice. Instead, we would like to remind that \emph{any} interface description provides a behavior abstraction---and in this sense, the syntactic interface, or API, provides such a coarse abstraction as well: data elements of a certain type are mapped to data elements of another type. 

Arguably, this is the coarsest abstraction of behavior that still is useful in practice. At the other end of the spectrum of levels of abstraction, one may argue that the finest possible abstraction is that of the code itself. However, we do share the perspective that code itself provides an abstraction of behavior in that in most programming languages it does not explicitly talk about time or resource utilization. In this sense, code is not the finest possible abstraction of behavior. Be that as it may, the above shows that there is a huge spectrum of possible levels of abstraction for describing the behavior, or an interface, of a system. It is important to remember that none of these levels as such is ``better'' than another level: this depends on the purpose of the abstraction, as is the case for any kind of model \cite{stachowiak}.

All this does not necessarily constitute novel insights. There is a consequence, however, that we feel has been underestimated in the past and that is the basis for the work presented here: Regardless of the level of abstraction of an interface that we choose, there must, by definition, always be parts of the behavior that are left unspecified. And this, in turn, means that the boundary of a software-intensive system usually is not, should not, and most importantly: \emph{cannot} be specified without leaving open several degrees of freedom in terms of how to ``legally'' use the system. It hence cannot be excluded that a system $S_1$ is used by another system $S_2$ in a way that the developer of $S_1$ has never envisaged and which violates implicit assumptions that were made when developing $S_1$, possibly leading to a runtime failure of $S_1$ and, by consequence, also of $S_2$.\footnote{We believe that this construction helps us understand what the notion of an ``open system'' means: At face value, a software-intensive system $S$ cannot be ``open'' because there is a well-specified (syntactic) interface for the behavior of $S$. However, if this interface description is too coarse and cannot or does not specify restrictions on how to use $S$, some usage of $S$ may violate implicit assumptions made while developing $S$. In this sense, \emph{every} software-intensive system is then ``open'' if it may be used in a way that violates implicit assumptions.}  

One consequence is that software-rooted failures for composed, or open, systems cannot be excluded by design. Because these systems are becoming ever more complex, we consider it hence mandatory to provide mechanisms for understanding the root causes of a failure, both for eliminating the underlying (technical) problem and for assigning blame.

We call a system \emph{accountable} that can help answer questions regarding the root cause of some (usually undesired) event (other notions of accountability are studied and formalized elsewhere \cite{kacianka2018understanding}). Accountability of a system requires at least two properties: The system must provide \emph{evidence}, usually in the form of logs, that helps understand the reasons of the undesired event. Moreover, there must be some mechanism that can reason about \emph{causality}.

Different forms of causality are useful in our context. Just to mention two of them, Granger causality identifies causality with a reduction of error when it comes to predicting one time series on the grounds of another \cite{granger:80}; and model-based diagnosis computes candidate sets of potentially faulty components that can explain the failure of a system \cite{DBLP:journals/ai/Reiter87}. In this article, we focus on one technical aspect of inferring one specific kind of causality, namely Halpern-Pearl-style actual causality for counter-factual arguments: Given a failure of a system (an effect) and a potential cause, we efficiently compute if, by counterfactual argument, the potential cause is an actual cause. 

\subsection{Causality}
Causality is a fundamental construct of  human perception. Although our ability to link effects to causes may seem natural, defining what precisely constitutes a cause has baffled scholars for centuries. Early work on defining causality goes back to Hume in the eighteenth century \cite{hume1748An}. Hume's definition hinted at the idea of counter-factual relations to infer the causes of effects. Informally, we argue, counter to the fact, that $A$ is a cause of $B$ if $B$ does not occur if $A$ no longer occurs. As Lewis noted with examples, this relation is not sufficient to deal with interdependent, multi-factorial, and complex causes \cite{lewis1973causation}. Thus, the search for a comprehensive general definition of causality continues. Recently, in computer science, there have been some successful and influential efforts, by Halpern and Pearl, at formalizing the idea of counter-factual reasoning and addressing the problematic examples in  philosophy literature \cite{halpern2016actual}.

The work by Halpern and Pearl covers two notions of causality, namely actual (token) causality and type (general) causality. Type causality is a forward-looking link that forecasts the effects of causes. It is useful for predictions in different domains like medicine \cite{kleinberg2011review}, and machine learning applications \cite{pearl2018theoretical}. In this paper, we focus on actual causality that is a rather retrospective linking of effects to causes. We are chiefly interested in the Halpern-Pearl (HP) definition of actual causality \cite{halpern2015modification}. Causality is useful in law \cite{moore2009causation},  security \cite{kuennemann2018automated}, software and system verification \cite{beer2012explaining,leitner-fischer2013causality}, databases \cite{meliou2010causality}, and accountability \cite{feigenbaum2011towards,kacianka2016towards}.

In essence, HP provides a formal definition of when we can call one or more events a cause of another event in a way that captures the human intuition. There have been three versions of HP: the original (2001), updated (2005), and modified (2015) versions, the latter of which we are using. The fundamental contribution of HP is that it opens the door for embedding the ability to reason about causality into socio-technical systems that are increasingly taking control of our daily lives. Among other use cases, since actual causality can be used to answer causal queries in the postmortem of unwanted behavior, it is a vital ingredient to enable accountability. Utilizing HP in technical systems makes it possible to empower them with all the other social concepts that should also be embedded into the technical world, such as responsibility \cite{chockler2004responsibility}, blame, intention, and moral responsibility \cite{halpern2018towards}.

Causality checking, using any version of  HP, is hard. For example, under the \textit{modified} definition, determining causality is in general $D^P_1$-complete, and $NP$-complete for singleton causes \cite{halpern2015modification}. The computational complexity led to a domain-specific (e.g., database queries,  counter-examples of model checking), adapted (e.g., use lineage of queries, use Kripke structure of programs), or restricted (e.g., monotone queries, singleton causes, single-equation models) utilization of HP (details in Section.\ref{sec:realted}). Conversely, brute-force approaches work with small  models (less than 30 variables \cite{hopkins2002strategies}) only. Therefore, to the best of our knowledge, there exists no comprehensive, efficient, and scalable  framework for modeling and benchmarking causality checking for binary models (i.e., models with binary variables only). Consequently, no existing algorithm allows applying HP on more complex examples than the simple cases in the literature.

In this paper, we argue that an efficient approach for checking causality opens the door for new use cases that leverage the concept in modern socio-technical systems. We conceptualize a novel approach towards checking causality in acyclic binary models based on the Boolean satisfiability problem (SAT). We intelligently encode  the core of HP as a SAT query that allows us to reuse the optimization power built into SAT solvers. As a consequence of the rapid development of SAT solvers ($1000X+$ each decade), they offer a promising tactic for any solution in formal methods and program analysis \cite{newsham2014impact}. Leveraging this power in causality establishes a robust framework for efficiently  reasoning  about actual causality. Moreover, since the transformation of SAT to other logic programming paradigms like answer set programming (ASP) is almost straightforward, this paper establishes the ground to tackle more causality issues (e.g., causality inference) using combinatorial solving approaches. Therefore, this paper makes the following contributions:
\begin{itemize}
	\item An approach to check causality over binary models. It includes two SAT-encodings that reflect HP, and two variants for optimization,
	\item A Java library \footnote {available at: \url{https://github.com/amjadKhalifah/HP2SAT1.0/}} that includes the implementation of our approach. It is easily extensible with new optimizations and algorithms.
	\item An empirical evaluation that uses different examples to show the efficiency and scalability of our approach.
\end{itemize}

\section{Related Work} \label{sec:realted}
To the best of our knowledge, no previous work has tackled the technical implementation of the (\textit{modified}) version of HP yet. Conversely, the first \textit{two} versions were used in different applications. Although they use different versions, we still consider them related. These applications used the definition as a refinement of other technologies. For example, in \cite{meliou2010causality,meliou2010complexity,bertossi2018characterizing,salimi2014causes}, a simplified HP (the updated version) was used to refine provenance information and explain database conjunctive query results.  Theses approaches, heavily depend on the correspondence between causes and domain-specific concepts like lineage, database repairs, and denial constraints. As a simplification, the authors used a single-equation causal model based on the lineage of the query in \cite{meliou2010causality,meliou2010complexity}, and no-equation model in \cite{bertossi2018characterizing,salimi2014causes}. The approaches also eliminate HP's treatment of preemption. 
Similar simplification has been made for Boolean circuits in \cite{chockler2008causes}.

In the context of model verification,  a concept that enhances a counterexample returned by a model checker with a causal explanation based on a simplified version of the updated HP was proposed in \cite{beer2012explaining}. Their domain-specific simplification comes from the fact that no dependencies between variables, and hence no equations, were required. Moreover, they used the definition of singleton causes. 
Similarly, in \cite{beer2015symbolic,leitner-fischer2015causality}, the authors implemented different flavors of causality checking (based on the updated HP) using Bounded Model Checking to debug models of safety-critical systems. They employed SAT solving indirectly in the course of model checking. The authors stated that this approach is better in large models regarding performance than their previous work.

The common ground between all these approaches and our approach is the usage of binary models. However, they were published before the  modified HP version. Hence, they used the older versions. In contrast to our approach, the previous works adapted the definition for a domain specific purpose. This was reflected in restrictions on the equations of the binary model (single-equation, independent variables), the cause (singleton), or dependency on other concepts (Kripke structures, lineage formula, counter-examples). On the contrary, we propose algorithms to compute causality on binary models, without adaptations.

Similar to our aim, \cite{hopkins2002strategies} evaluated search-based strategies for determining  causality according to the original HP definition. Hopkins proposed ways to explore and prune the search space, for computing $\vec{W},\vec{Z}$ that were required for that version, 
and considered properties of the causal model that makes it more efficient for computation. The results presented are of models that consist of less than 30 variables; in contrast, we show SAT-based strategies that compute causality for models of thousands of variables.

Lastly, to prove the complexity classes, Halpern \cite{halpern2015modification} used the relation between the conditions and the SAT problem. However, the concrete encoding in SAT with a size that is linear of the number of variables is still missing.


\section{Halpern-Pearl Definition}\label{sec:preliminaries}
In this section, we introduce the latest HP. All versions of HP use variables to describe the world. Structural equations define how these variables influence each other. The variables are split into \textit{exogenous} and \textit{endogenous} variables. The values of the former, called a \textit{context $\vec{u}$}, are assumed to be governed by factors that are not part of the modeled world. Consequently, exogenous variables cannot be part of a cause. The values of the endogenous variables, in contrast, are determined by the mentioned equations.  Formally, we describe a causal model in Definition \ref{def:causal_model}. Similar to Halpern, we limit ourselves to acyclic causal models, in which we can compute a unique solution for the equations given a context $\vec{u}$.
\begin{definition}\label{def:causal_model}
	\textbf{Causal Model}
	is a tuple $M = (U,V,R,F)$, where
	\begin{itemize}
		\item $U$, ${V}$ are sets of exogenous variables and endogenous variables respectively, 
		\item $R$ associates with $Y \in U \cup V$ a set $R(Y)$ of values,
		\item $F$ associates with $X \in V$  $F_X : (\times_{U \in U}{R}(U)) \times (\times_{Y \in {V}\backslash\{X\}}{R}(Y))\to {R}(X)$			
	\end{itemize}
\end{definition}
 Here, we define the necessary notations. 	
	 A \textit{primitive event} is a formula of the form $X = x$, for $X \in {V}$ and $x \in {R}(X)$. A sequence of variables $X_1,...,X_n$ is abbreviated as $\vec{X}$. Analogously, $X_1=x_1,...,X_n=x_n$ is abbreviated as $\vec{X}=\vec{x}$.  Variable Y can be set to value y by writing $Y \leftarrow y$ (analogously $\vec{Y} \leftarrow \vec{y}$ for vectors). $\varphi$ is a Boolean combination of primitive events. $(M, \vec{u}) \models X = x$ if the variable $X$ has value $x$ in the unique solution to the equations in $M$ in context $\vec{u}$. Intervention on a model is expressed, either by setting the values of $\vec{X}$ to $\vec{x}$, written as $[X_1 \leftarrow x_1, .., X_k \leftarrow x_k]$, or by fixing the values of $\vec X$ in the model, written as $M_{\vec{X} \leftarrow \vec{x}}$, which effectively replaces the equations for $\vec{X}$ by a constant equation $X_i=x_i$. So, $(M, \vec{u}) \models [\vec{Y} \leftarrow \vec{y}]\varphi$ is identical to $(M_{\vec{Y} \leftarrow \vec{y}},\vec{u}) \models \varphi$ \cite{halpern2015modification}. Lastly, we use $\mapsto$ to express value substitution, e.g., $[\overrightarrow{V} \mapsto \vec{v}'] F_{X_j}$ refers to the evaluation of equation $F_{X_j}$ given that the values of other variables are set to $\vec{v}'$.

\begin{definition} \label{def:ac}
	\textbf{Actual Cause} \cite{halpern2015modification}\\ $\vec{X} = \vec{x}$
	is an actual cause of $\varphi$ in $(M,\vec{u})$ if the following
	three conditions hold:\\ \textbf{AC1.} $(M, \vec{u}) \models (\vec{X} = \vec{x})$ and $(M, \vec{u}) \models \varphi$\\ \textbf{AC2.} There is a set $\vec{W}$ of variables in ${V}$ and a setting $\vec{x}'$ of the variables in $\vec{X}$ such that if $(M, \vec{u}) \models \vec{W} = \vec{w}$, then 	$(M, \vec{u}) \models [\vec{X} \leftarrow \vec{x}', \vec{W} \leftarrow \vec{w}]\neg\varphi$.\\ \textbf{AC3.} $\overrightarrow{X}$ is minimal: no subset of $\vec{X}$ satisfies AC1 and AC2.
\end{definition}
 The HP definition is presented in Definition \ref{def:ac}.   AC1 checks that the cause and the effect occurred in the real world, i.e., in $M$ given context $\vec{u}$. AC3 is a minimality check to deal with irrelevant variables. AC2 is the core; it matches the counter-factual definition of causality. It holds if there exists a \textit{setting} $\vec{x}'$ of the variables in $\vec{X}$ different from the \textit{original} setting $\vec{x}$ (which led to $\varphi$ holding true) and another set of variables $\vec{W}$ that we use to \textit{fix} variables at their original value, such that $\varphi$ does not occur. Inferring $\vec{W}$ is one source of the complexity of the definition. The role of $\vec{W}$ becomes clearer when we consider the examples in \cite{halpern2015modification}. Briefly, it captures the notion of \textit{preemption} which describes the case when  one possible cause rules out the other based on, e.g., temporal factors.
%
%

Halpern \cite{halpern2015modification} shows that determining causality is in general $D^P_1$-complete. The family of complexity classes $D^P_k$  was introduced, in \cite{aleksandrowicz2014computational}, to investigate the complexity of the \textit{original} and \textit{updated} definitions. \textit{AC1} can be checked in polynomial time, while \textit{AC2} is $NP$-complete, and \textit{AC3} is co-$NP$-complete. To prove this complexity, Halpern \cite{halpern2015modification} showed that AC2 could be reduced to SAT, and AC3 to UNSAT. However, the concrete encoding was not specified.

\textbf{Example} We consider a famous example from the literature: \textit{the  rock-throwing example}  \cite{lewis1973causation}, described as follows: Suzy and Billy both throw a rock at a bottle which shatters if one of them hits it. We know that Suzy's rock hits the bottle slightly earlier than Billy's and both are accurate throwers. Halpern models this story using the following endogenous variables:
$ST,$ $BT$ for ``Suzy/Billy throws'', with values 0 (the person does not throw) and 1 (s/he does), similarly, $SH, BH$ for ``Suzy/Billy hits'', and $BS$ for ``bottle shatters''. The equations are:\\
\begin{small}
	\indent	- $BS$ is 1 iff one of $SH$ and $BH$ is 1, i.e., $BS = SH \lor BH$ \\
	\indent	- $SH$ is 1 iff $ST$ is 1, i.e., $SH = ST$ \\
	\indent	- $BH = 1$ iff $BT = 1$ and $SH = 0$, i.e., $BH = BT \land \neg SH$ \\
	\indent	- $ST$,  $BT$  are set by exogenous variables, i.e., $ST = ST_{exo};BT = BT_{exo}$
\end{small}

Assuming a context $\vec{u}$ that sets $ST$ = 1 and $BT$ = 1, the original evaluation of the model is: 	$BS$=1	$SH$=1	$BH$=0	$ST$=1 $BT$=1. Let us assume we want to find out whether $ST$ = 1 is a cause of $BS$ = 1.  Obviously, AC1 is fulfilled. As a candidate cause, we set  $ST$ = 0. A first attempt with $\vec{W} = \emptyset$ shows that AC2 does \textit{not} hold. However, if we arbitrary take $\vec{W} = \{BH\}$, i.e., we replace the equation of $BH$ with $BH$ = 0, then AC2 holds because $BS$ = 0, and AC3 automatically holds since the cause is a singleton. Thus, $ST=1$ is a cause of $BS=1$.

\section{Approach}\label{sec:approach}
In this section, we propose our algorithmic approaches towards the HP definition. To answer a causal question efficiently, we need to find an intelligent way to search for a $\vec{W}$ such that AC2 is fulfilled as well as to check whether AC3 holds. Therefore, we propose an approach that uses SAT-solving. We show how to encode AC2 into a formula whose (un)satisfiability and thus the (un)fulfillment of AC2 is determined by a SAT-solver. Similarly, we show how to generate a formula whose satisfying assignments obtained with a solver indicate if AC3 holds. 

\subsection{Checking AC2}\label{subsec:ac2}
For AC2, such a formula $F$ has to incorporate (1) $\neg\varphi$, (2) a context $\vec{u}$, (3) a setting, $\vec{x}'$ for candidate cause, $\vec{X}$, and (4) all possible variations of $\vec{W}$, while still (5) keeping the semantics of the underlying model $M$. In the following, we describe the concept and, then, the algorithm that generates such a formula $F$. Since we check actual causality in hindsight, we have a situation where $\vec{u}$  and $\vec{v}$ are determined, and we have a candidate cause $\vec{X} \subseteq \vec{V}$. Thus, the first two requirements are straightforward. First, the effect $\varphi$ should not hold anymore, hence, $\neg\varphi$ holds. Second, the context $\vec{u}$ should be set to its values in the original assignment (the values $\vec{u}$  of $\vec{U}$). 

Since we are treating binary models only, the setting $\vec{x}'$ (from AC2) can be tailored down to negating the original value of each cause variable. This is a result of Lemma \ref{lemma:negation}, which utilizes the fact that we are considering binary variables to exclude other possible settings and define precisely the \textit{setting} $\vec{x}'$. The proof of the Lemma is given in Appendix \ref{sec:app1}. Thus, to address the third requirement, according to Lemma \ref{lemma:negation}, for $\neg\varphi$ to hold, all the variables of the candidate cause $\vec{X} $ are negated. 

\begin{lemma}\label{lemma:negation}
	In a binary model, if $\vec{X}=\vec{x}$ is a cause of $\varphi$, according to Definition \ref{def:ac}, then every $\vec{x}'$ in the definition of AC2 always satisfies $\forall i.x_i'=\neg x_i$.
\end{lemma}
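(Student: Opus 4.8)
The plan is to argue by contradiction, exploiting minimality (AC3) together with the binary nature of the variables. Suppose $\vec{X} = \vec{x}$ is a cause of $\varphi$ in $(M, \vec{u})$, and suppose for contradiction that some setting $\vec{x}'$ witnessing AC2 has a coordinate $i$ with $x_i' = x_i$ (rather than $x_i' = \neg x_i$); since the model is binary, $x_i' \ne \neg x_i$ is the only alternative, so $x_i' = x_i$. Let $\vec{W}$ and $\vec{w}$ be the accompanying set and valuation from AC2, so that $(M, \vec{u}) \models \vec{W} = \vec{w}$ and $(M, \vec{u}) \models [\vec{X} \leftarrow \vec{x}', \vec{W} \leftarrow \vec{w}]\neg\varphi$.

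The key idea is that the coordinate $X_i$ is being ``set'' to the value it already has, so it does no real work in the intervention, and we should be able to remove it from the cause and fold it into $\vec{W}$ instead. Concretely, define $\vec{X}^- = \vec{X} \setminus \{X_i\}$ with the induced original values $\vec{x}^-$, define the reduced setting $(\vec{x}')^-$ by dropping coordinate $i$, and define $\vec{W}^+ = \vec{W} \cup \{X_i\}$ with valuation $\vec{w}^+$ extending $\vec{w}$ by $X_i = x_i$. First I would check AC1 for $\vec{X}^- = \vec{x}^-$: this is immediate, since AC1 for $\vec{X} = \vec{x}$ gives $(M,\vec{u}) \models X_j = x_j$ for every $j$, in particular for the coordinates of $\vec{X}^-$, and $(M,\vec{u})\models\varphi$ is unchanged. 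Next I would check AC2 for $\vec{X}^- = \vec{x}^-$ using $\vec{W}^+$ and $(\vec{x}')^-$: since $x_i = x_i'$ and $(M,\vec{u})\models X_i = x_i$, adding $X_i$ to $\vec{W}$ at value $x_i$ is consistent ($(M,\vec{u})\models \vec{W}^+ = \vec{w}^+$), and the combined intervention $[\vec{X}^- \leftarrow (\vec{x}')^-, \vec{W}^+ \leftarrow \vec{w}^+]$ fixes exactly the same variables to exactly the same values as the original intervention $[\vec{X} \leftarrow \vec{x}', \vec{W} \leftarrow \vec{w}]$ — the only difference is bookkeeping about which fixed variable sits in the ``cause'' slot versus the ``$\vec{W}$'' slot. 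Hence $(M,\vec{u}) \models [\vec{X}^- \leftarrow (\vec{x}')^-, \vec{W}^+ \leftarrow \vec{w}^+]\neg\varphi$, so AC2 holds for the strictly smaller $\vec{X}^-$. This contradicts AC3, the minimality of $\vec{X}$. Therefore no such coordinate $i$ exists, i.e., $x_i' = \neg x_i$ for all $i$.

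I expect the main obstacle to be the edge case $\vec{X} = \{X_i\}$, where removing coordinate $i$ leaves $\vec{X}^- = \emptyset$: one has to confirm that AC3's phrase ``no subset of $\vec{X}$ satisfies AC1 and AC2'' is intended to cover the empty subset, and that the empty cause with $\vec{W}^+ = \{X_i\}$ genuinely satisfies AC1 (vacuously, together with $(M,\vec{u})\models\varphi$) and AC2 (via the same intervention). A secondary technical point is making the ``same intervention, different labelling'' argument rigorous — specifically, that $[\vec{X}^- \leftarrow (\vec{x}')^-, \vec{W}^+ \leftarrow \vec{w}^+]$ and $[\vec{X} \leftarrow \vec{x}', \vec{W} \leftarrow \vec{w}]$ induce the same submodel $M_{\vec{Z}\leftarrow\vec{z}}$ where $\vec{Z} = \vec{X}\cup\vec{W}$ and $\vec{z}$ is the common valuation, which follows directly from the semantics $(M,\vec{u}) \models [\vec{Y}\leftarrow\vec{y}]\varphi \iff (M_{\vec{Y}\leftarrow\vec{y}},\vec{u})\models\varphi$ recalled in the preliminaries. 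Everything else is routine.
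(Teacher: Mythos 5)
Your overall strategy---derive a contradiction with AC3 by showing that any coordinate with $x_i'=x_i$ can be removed from the cause---is the same as the paper's, but your technical execution differs in a useful way. The paper drops $X_n$ from both the cause and the intervention, leaving it to be re-determined by its structural equation (note that $F_{X_n}$ is retained in $\Psi''$), and then asserts that a certain propositional implication is a tautology; that is exactly the delicate step, since under the reduced intervention $F_{X_n}$ need not evaluate to $x_n$. You instead relabel $X_i$ from the cause slot into $\vec{W}$, fixed at its actual value $x_i=x_i'$, so that $[\vec{X}^-\leftarrow(\vec{x}')^-,\vec{W}^+\leftarrow\vec{w}^+]$ and $[\vec{X}\leftarrow\vec{x}',\vec{W}\leftarrow\vec{w}]$ induce the identical submodel and $\neg\varphi$ transfers for free, while $(M,\vec{u})\models\vec{W}^+=\vec{w}^+$ follows from AC1. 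For $|\vec{X}|\geq 2$ this is shorter and more transparent than the paper's argument.

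The one genuine problem is the singleton case, which you correctly flag but resolve the wrong way. The empty cause with $\vec{W}^+=\vec{W}\cup\{X_i\}$ does \emph{not} satisfy AC2: in the modified definition $\vec{w}^+$ must be the actual values, so $[\vec{W}^+\leftarrow\vec{w}^+]$ fixes variables only at values they already take, and in an acyclic model this preserves the unique solution, so $\varphi$ still holds and $\neg\varphi$ fails. Hence you cannot contradict AC3 via the empty subset. But the same observation closes the case directly: if $\vec{X}=\{X_1\}$ and $x_1'=x_1$, the alleged witness $[X_1\leftarrow x_1,\vec{W}\leftarrow\vec{w}]$ itself intervenes only with actual values, so it cannot make $\neg\varphi$ true---contradicting AC2 for the original cause with no appeal to AC3. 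Substituting this direct argument for the empty-cause route makes your proof complete.
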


To ensure that the semantics of the model are reflected in $F$ (requirement 5), we use the logical equivalence operator ($\leftrightarrow$) to express the equations. Particularly, to represent the endogenous variable $V_i$ and its dependency on other variables, we use this clause $V_i \leftrightarrow F_{V_i}$. This way, we create a (sub-)formula that evaluates to true if both sides are \textit{equivalent} in their evaluation. If we do so for all other variables (that are not affected by criteria 1-3), we ensure that $F$ is only satisfiable for assignments that respect the semantics of the model. 

Finally, we need to find a possibility to account for $\vec{W}$ (requirement 4) without having to iterate over the power-set of all variables. In $F$, we accomplish this by adding a disjunction with the positive or negative literal of each variable $V_i$ to the previously described equivalence-formula, depending on whether the actual evaluation of $V_i$ was $1$ or $0$, respectively. Then, we can interpret $((V_i \leftrightarrow F_{V_i}) \lor (\neg) V_i)$ as ``$V_i$ either follows the semantics of $M$ or takes on its original value represented as a positive or negative literal''. By doing so for all endogenous variables, we allow for all possible variations of $\vec{W}$. It is worth noting that we exclude those variables that are in $\vec{X}$ from obtaining their original value, as we are already changed their setting to $\neg\vec{x}$ and thus keeping a potential cause at its original value is not reasonable. Obviously, it might not make sense to always add the original value for all variables. We leave this as a candidate optimization for a future work.
\paragraph{\textbf{AC2 Algorithm}}
\begin{algorithm} [t]
	\caption{Check whether AC2 holds using SAT}\label{algorithm:fulfillsac2_sat}
	\begin{algorithmic}[1]
		\Input causal model $M$, context $\langle U_1,\ldots,U_n\rangle=\langle u_1,\ldots,u_n\rangle$, effect $\varphi$, candidate cause $\langle X_1,\ldots,X_\ell\rangle = \langle x_1,\ldots,x_\ell\rangle$, evaluation $\langle V_1,\ldots,V_m\rangle=\langle v_1,\ldots,v_m\rangle$
		\Function{FulfillsAC2}{$M, \vec{U}=\vec{u}, \varphi, \vec{X} = \vec{x},\vec{V}=\vec{v}$}
		\If{$(M, \vec{u}) \models [\vec{X} \leftarrow \neg\vec{x}]\neg\varphi$}\label{alg:line:fulfillsac2_sat:empty_w}
		\Return{$\emptyset$}
		\Else 	
		\State{ \label{alg:line:fulfillsac2_sat:f}
			$F := \neg\varphi \wedge \bigwedge\limits_{i=1\ldots n} f(U_i=u_i)$ 	$\wedge\bigwedge\limits_{i=1\ldots m, \not\exists j\bullet X_j=V_i} \left(V_i \leftrightarrow F_{V_i} \lor f(V_i=v_i)\right)$ \WRP	$ \wedge\bigwedge\limits_{i=1\ldots\ell} f(X_i=\neg x_i)$
			\\with $f({Y}={y}) = \begin{cases}
			Y,			&y = 1\\
			\neg Y,	&y=0
			\end{cases}$}
		\If{$\langle U_1=u_1\ldots U_n=u_n,V_1=v_1'\ldots V_m=v_m'\rangle \in$  $\textit{SAT}(\textit{CNF}(F))$}\label{alg:line:fulfillsac2_sat:assignment}
		\State $\vec{W} := \langle W_1,\ldots,W_s\rangle$ s.t. $\forall i\forall j\bullet (i\not= j\Rightarrow$ $W_i\not= W_j) \wedge (W_i=V_j\Leftrightarrow v_j'=v_j)$\label{alg:line:fulfillsac2_sat:w}
		\State \Return{$\vec{W}$}
		\Else{}
		\Return{\textit{not satisfiable}}
		\EndIf
		\EndIf
		\EndFunction
	\end{algorithmic}
\end{algorithm}

We formalize the above in Algorithm \ref{algorithm:fulfillsac2_sat}. The evaluation, in the input, is a list of all the variables in $M$ and their values under $\vec{u}$. The rest is self-explanatory. We slightly change the definition of $\varphi$ from a combination of primitive events to a combination of literals. For instance, instead of writing $\varphi = (X_1 = 1 \land X_2 = 0 \lor X_3 = 1)$, we would use $\varphi = (X_1 \land \neg X_2 \lor X_3)$. 
In other words, we replace each primitive $(X = x) \in \varphi$ with $X$ if $x = 1$ or $\neg X$ if $x = 0$ in the original assignment, such that we use $\varphi$ in a formula. The same logic is achieved using the function $f({Y}={y})$ in line 5 of the algorithm.

Before we construct formula $F$, we check if $\vec{X} = \vec{x}$ given $\vec{W} = \emptyset$ (line \ref{alg:line:fulfillsac2_sat:empty_w}) fulfills AC2. Hence, in this case, we do not need to look for a $\vec{W}$. Otherwise, we construct $F$ (line \ref{alg:line:fulfillsac2_sat:f}) that is a conjunction of $\neg\varphi$ and the exogenous variables of $M$ as literals depending on $\vec{u}$. Note that $\varphi$ does not necessarily consist of a single variable only; it can be any Boolean formula. For example, if $\varphi = (BS = 1 \land BH = 0)$ in the notation as defined by \cite{halpern2015modification}, we would represent it in $F$ as $(BS \land \neg BH)$.
This consideration is handled by Algorithm \ref{algorithm:fulfillsac2_sat} without further modification. In addition, we represent each endogenous variable, $V_i \not\in \vec{X}$ with a disjunction between its equivalence formula $V_i \leftrightarrow F_{V_i}$ and its literal representation. To conclude the formula construction, we add the negation of the candidate cause $\vec{X}=\vec{x}$, a consequence of Lemma \ref{lemma:negation}. 

If $F$, represented in a conjunctive normal form, is satisfiable, we obtain the satisfying assignment (line \ref{alg:line:fulfillsac2_sat:assignment}) and compute $\vec{W}$ (line \ref{alg:line:fulfillsac2_sat:w}) as the set of those variables whose valuations were \textit{not} changed in order to ensure $\neg\varphi$ that is finally returned. If $F$ is unsatisfiable, \textit{not satisfiable} is returned. 

\paragraph{\textbf{Minimality of $\vec{W}$}} In Algorithm \ref{algorithm:fulfillsac2_sat}, we considered $\vec{W}$ to consist of all the variables whose original evaluation and satisfying assignments are equal. This is an  over-approximation of the $\vec{W}$ set because, possibly, there are variables that are not affected by changing the values of the cause, and are yet not required to be fixed in $\vec{W}$. Despite this consideration, a non-minimal $\vec{W}$ is still valid according to HP. However, to compute the degree of responsibility \cite{chockler2004responsibility}, a minimal $\vec{W}$ is required. Therefore, we briefly discuss a modification that yields a minimal $\vec{W}$.

We need to modify two parts of Algorithm \ref{algorithm:fulfillsac2_sat}. First, we cannot just consider \textit{one} satisfying assignment for $F$. Rather, we need to analyze \textit{all} the assignments. Determining all the assignments is called an All-SAT problem. Second, we have to further analyze each assignment of $\vec{W}$ to check if we can find a subset such that $F$, and thus AC2, still holds. Specifically, we check if each element in $\vec{W}$ is equal to its original value because it was explicitly set so, or because it simply evaluated according to its equation. In the latter case, it is \textit{not} a required part of $\vec{W}$. Precisely, in Algorithm \ref{algorithm:fulfillsac2_sat}, everything stays the same until the computation of $F$. After that, we check whether $F$ is satisfiable, but now we compute all the satisfying assignments. Subsequently, for each satisfying assignment, we compute $\vec{W_i}$ as explained. Then, we return the smallest $\vec{W_i}$, at the cost of iterating over \textit{all} satisfying assignments of the variables in $V$.

%
%
\subsection{Checking AC3}\label{subsec:ac3}
Our approach for checking AC3 using SAT is very similar to the one for AC2. We construct another SAT formula, $G$. The difference between $G$ and $F$ is in how the parts of the cause are represented. In $G$, we allow each of them to take on its original value \textit{or} its negation (e.g., $A \lor \neg A$). Clearly, we could replace that disjunction with \textit{true} or $1$. However, we explicitly do not perform this simplification such that a satisfying assignment for $G$ still contains all variables of the causal model, $M$. 

In general, the idea is as follows. If we find a satisfying assignment for $G$ such that at least one conjunct of the cause $\vec{X} = \vec{x}$ takes on a value that equals  the one computed from its corresponding equation, then, we know that this particular conjunct is not required to be part of the cause and there exists a subset of $\vec{X}$ that fulfills AC2 as well. The same applies if the conjunct is equal to its original value in the satisfying assignment; this would mean that it is part of a $\vec{W}$ such that $\neg\varphi$ holds. When collecting all those conjuncts, we can construct a new cause $\vec{X}' = \vec{x}'$ by subtracting them from the original cause and then checking whether or not it fulfills AC1. If it does, AC3 is violated because we identified a subset $\vec{X}'$ of $\vec{X}$ for which both AC1 and AC2 hold. 

\paragraph{\textbf{AC3 Algorithm}}

We formalize our approach in Algorithm \ref{algorithm:fulfillsac3_sat}. The input and the function $f(V_i=v_i)$ remain the same as for Algorithm \ref{algorithm:fulfillsac2_sat}; the latter is omitted. In case $\vec{X} = \vec{x}$ is a singleton cause or $\varphi$ did not occur, AC3 is then fulfilled automatically (line \ref{alg:line:fulfillsac3_sat:singleton_phi}). Otherwise, line \ref{alg:line:fulfillsac3_sat:g} shows how formula $G$ is constructed. This construction is only different from the construction of $F$ in Algorithm \ref{algorithm:fulfillsac2_sat} in how to treat variables $\in\vec{X}$. They are added as a disjunction of their positive and negative literals. Once $G$ is constructed, we check its satisfiability, if it is not satisfiable we return \textit{true}, i.e., AC3 is fulfilled. For example, this can be the case if the candidate cause $\vec{X}$ did not satisfy AC2. Otherwise, we check \textit{all} its satisfying assignments. We need to do this, as $G$ might also be satisfiable for the original $\vec{X} = \vec{x}$ so that we cannot say for sure that any satisfying assignment found, proves that there exists a subset of the cause. Instead, we need to obtain all of them. Obviously, this is problematic and could decrease the performance if $G$ is satisfiable for a large number of assignments. Therefore, we plan to address this in future work.

 However, for now, we compute one assignment and check the \textit{count} of the conjuncts in the cause that have different values in $\vec{v'}$ than their original, and that their formula does not evaluate to this assignment (line \ref{alg:line:fulfillsac3_sat:new_cause}).   If the \textit{count} is less than the size of the cause, then AC3 is violated. Otherwise we check another assignment. 

\begin{algorithm}[t]
	\caption{Check whether AC3 holds using ALL-SAT}\label{algorithm:fulfillsac3_sat}
	\begin{algorithmic}[1]
		\Input{causal model $M$, context $\langle U_1,\ldots,U_n\rangle=\langle u_1,\ldots,u_n\rangle$, effect $\varphi$, candidate cause $\langle X_1,\ldots,X_\ell\rangle = \langle x_1,\ldots,x_\ell\rangle$, evaluation $\langle V_1,\ldots,V_m\rangle=\langle v_1,\ldots,v_m\rangle$}	
		\Function{FulfillsAC3}{$M, \vec{U}=\vec{u}, \varphi, \vec{X} = \vec{x}, \vec{V} = \vec{v}$}
		\If{$\ell > 1 \land (M, \vec{u}) \models \varphi$}\label{alg:line:fulfillsac3_sat:singleton_phi}
		\State \label{alg:line:fulfillsac3_sat:g}
		$G := \neg\varphi \wedge \bigwedge_{i=1\ldots n} f(U_i=u_i)$  $\wedge\bigwedge_{i=1\ldots m, \not\exists j\bullet X_j=V_i} \left(V_i \leftrightarrow F_{V_i} \lor f(V_i=v_i)\right)$ \WRP
		$	\wedge\bigwedge_{i=1\ldots\ell} X_i\lor\neg X_i$
		\label{alg:line:fulfillsac3_sat:satisfying_assignments}
		\ForAll{$\langle \vec U=\vec u, \vec V=\vec{v}'\rangle \in \textit{SAT}(\textit{CNF}(G))$}
		\If{$|\bigl\{j\in\{1,..,\ell\}|\exists i\bullet V_i=X_j\land v_i'\not=v_i$ \WRP$\land v_i'\not= [\overrightarrow{V} \mapsto \vec{v}'] F_{X_j}\bigr\}| < \ell$} \label{alg:line:fulfillsac3_sat:new_cause}
		\Return{\textit{false}}
		\EndIf
		\EndFor
		\EndIf
		\State \Return{\textit{true}}
		\EndFunction
	\end{algorithmic}
\end{algorithm}


\paragraph{\textbf{Combining AC2 and AC3}}\label{sec:ac23}
While developing Algorithm \ref{algorithm:fulfillsac2_sat} and Algorithm \ref{algorithm:fulfillsac3_sat}, we discovered that combining both is an option for optimizing our approach. In particular, we can exploit the relationship between the satisfying assignment(s) for the formulas $F$ and $G$, i.e., $\vec{a}_F \in A_G$. This holds, as we allow the variables $\vec{X}$ of a cause to be both $1$ or $0$ in $G$ so that we can show that the satisfying assignment, $\vec{a}_F$ for $F$ in Algorithm \ref{algorithm:fulfillsac2_sat} is an element of the satisfying assignments $A_G$, for $G$. Then, instead of computing both $F$ and $G$, we could just compute $G$, then filter those satisfying assignments that $F$ would have yielded and use them for checking AC2 while we use all satisfying assignments of $G$ to check AC3.

\subsection{Example} \label{subsec:ex}

Recall the example from Section.\ref{sec:preliminaries}.  Since the context $\vec{u}$ sets $ST = 1$ and $BT = 1$, the original evaluation of the model is shown in the first row of Table.\ref{tab:SB-values}. We want to find out whether $ST = 1$ is a cause of $BS = 1$. Algorithm.\ref{algorithm:fulfillsac2_sat} generates the following $F$, that is satisfiable for one assignment (Table.\ref{tab:SB-values} second row): $BS=0$,	$SH=0$,	$BH=0$,	$ST=0$, $BT=1$. All the variables, except  $BH$ and $BT$, change their evaluation. Thus, we conclude that $ST = 1$ fulfills AC2 with $\vec{W} = \{BH, BT\}$. Notice that even though this $\vec{W}$ is not minimal, it is still valid. That said, we still can calculate a minimal $\vec{W}$.

	\begin{multline*}
	F = \overbrace{\neg BS}^{\neg\varphi} \;\land\; \overbrace{ST_{exo} \land BT_{exo}}^{\vec{u}}
	\;\land\; (\overbrace{(BS \leftrightarrow SH \lor BH)}^{\text{equation of BS}} \lor \overbrace{BS}^{\mathclap{\text{orig. BS}}}) 
	\;\land\;(\underbrace{(SH \leftrightarrow ST)}^{\text{equation of SH}} \lor \underbrace{SH}^{\mathclap{\text{orig. SH}}}) \\ 
	\land\; (\underbrace{(BH \leftrightarrow BT \land \neg SH)}_{\text{equation of BH}} \lor \underbrace{\neg BH}_{\mathclap{\text{orig. BH}}}) 
	\;\land \underbrace{\neg ST}_{\text{equation of ST}}
	\land (\underbrace{(BT \leftrightarrow BT_{exo})}_{\text{equation of BT}} \lor \underbrace{BT}_{\mathclap{\text{orig. BT}}})
	\end{multline*}

\begin{table}[t]
\caption{$F$, $G$ assignments}\label{tab:SB-values}
	\centering
		\begin{tabular}{ |l | c | c | c | c | c| }
	\hline
	&$BS$	&$SH$	&$BH$	&$ST$	&$BT$ \\ \hline
	$M$ 								&1			&1 		&0 		&1 		&1 		\\
	$F$  	&0 		&0 		&0 		&0 		&1	\\
	$G$	   $\vec{a}_1$ 	&0 		&0 		&0 		&0 		&0	\\
	$G$	  $\vec{a}_2$	&0		&0 		&0 		&0 		&1 	\\\hline
\end{tabular}
\end{table}

To illustrate checking AC3, we ask a different question: are $ST = 1$ \textit{and} $BT = 1$ a cause of $BS = 1$?  Note that AC2 is fulfilled with $W = \emptyset$, for this cause. Obviously, if both do not throw, the bottle does not shatter. Using Algorithm \ref{algorithm:fulfillsac3_sat}, we obtain the following $G$ formula.

\begin{multline*}
	G = \overbrace{\neg BS}^{\neg\varphi} \;\land\; \overbrace{ST_{exo} \land BT_{exo}}^{\vec{u}}
	\;\land\; (\overbrace{(BS \leftrightarrow SH \lor BH)}^{\text{equation of $BS$}} \lor \overbrace{BS}^{\mathclap{\text{orig. $BS$}}}) 
	\;\land\; (\overbrace{(SH \leftrightarrow ST)}^{\text{equation of $SH$}} \lor \overbrace{SH}^{\mathclap{\text{orig. $SH$}}}) \\ \;\land 
	(\underbrace{(BH \leftrightarrow BT \land \neg SH)}_{\text{equation of $BH$}} \lor \underbrace{\neg BH}_{\mathclap{\text{orig. $BH$}}}) 
	\;\land\;  (\underbrace{ST}_{\text{orig. $ST$}} \hspace{2mm}\;\lor\;\hspace{2mm} \underbrace{\neg ST}_{\mathclap{\text{negated orig. $ST$}}})
	\;\land\; (\underbrace{BT}_{\text{orig. $BT$}} \hspace{2mm}\;\lor\;\hspace{2mm} \underbrace{\neg BT}_{\mathclap{\text{negated orig. $BT$}}})
	\end{multline*}

 As Table.\ref{tab:SB-values} shows,  $G$ is satisfiable with \textit{two} assignments $\vec{a}_1$ and $\vec{a}_2$. For $\vec{a}_1$, we can see that both $ST$ and $BT$ have values different from their original evaluation, and that both do not evaluate according to their equations. Thus, we cannot show that AC3 is violated, yet. For $\vec{a}_2$,  $BT = 1$, so it is equal to the evaluation of its equation. Consequently, $BT$ is not a required part of $\vec{X}$, because $\neg\varphi = \neg BS$ still holds although we did not  set $BT = 0$.  So, AC3 is not fulfilled because AC1 and AC2 hold for a subset of the cause.

\section{Evaluation}\label{sec:evaluation}
In this section, we provide details on the implementation of our algorithms, and  evaluate their efficiency.

\subsection{Technical implementation}
Our implementation is a Java library. As such, it can easily be  integrated into other systems. It supports both the creation of binary causal models as well as solving causality problems. For the modeling part and the implementation of our SAT-based approach, we take advantage of the library \textit{LogicNG} \footnote{\url{https://github.com/logic-ng/LogicNG}} . It provides methods for creating and modifying boolean formulas and allows to analyze those using different SAT solvers. We use the implementation of \textit{MiniSAT} solver  \cite{een2003extensible} within \textit{LogicNG}. For the sake of this evaluation, we will compare the \textit{execution time} and \textit{memory allocation} for the following \textit{four} strategies: \texttt{BRUTE\_FORCE} -a standard brute-force implementation of HP, \texttt{SAT} -SAT-based approach (Algorithm\ref{algorithm:fulfillsac2_sat}, \ref{algorithm:fulfillsac3_sat}), \texttt{SAT\_MINIMAL} -the minimal $\vec{W}$ extension, and \texttt{SAT\_COMBINED} -optimization of the SAT-based approach by combining AC2 and AC3. All our measurements were performed on Ubuntu 16.04 LTS machine equipped with an Intel\textregistered{} Core\texttrademark{} i7-3740QM CPU and 8 GB RAM. For each benchmark, we specified 100 warmup and 100 measurement iterations.

\subsection{Methodology and Evaluated Models}
 In summary, we experimented with 12 different models. On the one hand, we took the binary models from \cite{halpern2015modification}. There were 5 of them in total, namely, \textit{Rock-Throwing}, \textit{Forest Fire}, \textit{Prisoners}, \textit{Assassin}, and \textit{Railroad}.  Since these examples are rather small ($\le$ 5 variables) and easy to understand, they mainly serve for sanity checks of our approach. On the other hand, we used examples that do not stem from the literature on causality. One is a security example obtained from an industrial partner. It describes the causal factors that lead to stealing a security master key by an insider. We refer to it as \textit{SMK}. We used two variants of that example, one with 3 suspects, and the other with 8 suspects. 
 We also used one example from the safety domain that describes a leakage in a sub-sea production system; we refer to it as \textit{LSP}. Last, we artificially generated models of binary trees with different heights, denoted as \textit{$BT_{depth}$}, and combined them with other  random models (non-tree graphs), denoted as \textit{ABT}. For a thorough description of each model, please refer to this report \cite{models}. Table \ref{tab:evaluated_causal_models} shows the list of the bigger models, along with the number of endogenous variables. In total, we analyzed 278 scenarios, however, for space limits, we focus on a subset of the scenarios. 
 
\begin{table}[t]
		\caption{Evaluated causal models}\label{tab:evaluated_causal_models}
	\centering
	\begin{tabular}{| l |  c |}
		\hline
		\textbf{Causal Model} &\textbf{Endogenous Vars.} \\ \hline
		Abstract Models: $AM_1$ , $AM_2$ &8 , 3\\ 
		Steal Master Key: 3 suspects ($SMK_3$), 8 suspects ($SMK_8$)  &36, 91 \\ 
		Leakage in Sub-sea Production System (LSP) \cite{cheliyan2018fuzzy} &41 \\ 
		Binary Trees of different heights (BT) & 15 - 4095 \\ 
		Abstract Model 1 Combined with Binary Tree (ABT) & 4103 \\ \hline
	\end{tabular}

\end{table}

\setcounter{rowcounter}{0}
\DTLloaddbtex{\rockthrowingselected}{data/brute_vs_sat.dbtex}
{\begin{table}[h]
		\centering
		\caption{Discussed scenarios as part of the analysis}\label{tab:evaluation:brute_sat}
\verytiny
\setlength\tabcolsep{4pt}
\resizebox{\columnwidth}{!}{%
\begin{tabular}{>{\centering\arraybackslash}p{0.7cm} | l H | >{\centering\arraybackslash}p{4.0cm} | c || >{\centering\arraybackslash}p{0.15cm} | >{\centering\arraybackslash}p{0.15cm} | >{\centering\arraybackslash}p{0.15cm} | c | >{\centering\arraybackslash}p{1.0cm} | >{\centering\arraybackslash}p{1.0cm} | >{\centering\arraybackslash}p{1.0cm} | >{\centering\arraybackslash}p{1.0cm}|}
&ID &$\vec{u}$ &$\vec{X} = \vec{x}$ &$\varphi$ &\parbox[t]{1.5mm}{\rotatebox[origin=c]{90}{AC1}} &\parbox[t]{1.5mm}{\rotatebox[origin=c]{90}{AC2}} &\parbox[t]{1.5mm}{\rotatebox[origin=c]{90}{AC3}} &\parbox[t]{3mm}{\rotatebox[origin=c]{90}{$\abs{\text{Minimal } \vec{W}}$}} 
&\parbox[t]{3mm}{\rotatebox[origin=c]{90}{BRUTE}}
&\parbox[t]{3mm}{\rotatebox[origin=c]{90}{SAT}}
&\parbox[t]{3mm}{\rotatebox[origin=c]{90}{SAT$^{\text{MINIMAL}}$}}
&\parbox[t]{3mm}{\rotatebox[origin=c]{90}{SAT$_{\text{COMBINED}}$}}\\ \hline \hline
\parbox[t]{5mm}{\multirow{14}{*}{\rotatebox[origin=c]{90}{\parbox{2cm}{\centering $SMK_3$}}}}
&3 &\multirow{6}{*}{$\forall U \in \mathcal{U}: U = 1$}
&$FS_{U_1} = 1 \land FN_{U_1} = 1\land\: A_{U_1} = 1$ &\multirow{8}{*}{\parbox[t]{1.5mm}{\rotatebox[origin=c]{90}{$SMK = 1$}}} &\cmark &\cmark &\cmark &4 \csvEight{73} \\ \cline{2-2}\cline{4-4}\cline{6-12}
&22 &\multirowcell{8}{$\forall U_{U_1} \in \mathcal{U}: U_{U_1} = 0$\\$\forall U_{U_2} \in \mathcal{U}: U_{U_2} = 0$\\$\forall U_{U_3} \in \mathcal{U}: U_{U_3} = 1$}
&$FS_{U_3} = 1$ &&\cmark &\xmark &\cmark &-- \csvEight{89} \\ \cline{2-2}\cline{4-4}\cline{6-12}
&24 &&$FS_{U_3} = 1 \land FN_{U_3} = 1\land\: A_{U_3} = 1$ &&\cmark &\cmark &\cmark &0 \csvEight{97} \\ \cline{2-2}\cline{4-4}\cline{6-12}
&26 &&$FS_{U_3} = 1 \land FN_{U_3} = 1$\newline$\land\: A_{U_3} = 1 \land AD_{U_3} = 1$ &&\cmark &\cmark &\xmark &0 \csvEight{105} \\ \cline{2-2}\cline{4-12}
&29 &&$A_{U_3} = 1 \land AD_{U_3} = 1$\newline\newline\newline &\multirow{4}{*}{\parbox[t]{1.5mm}{\rotatebox[origin=c]{90}{$SD = 1$}}}&\cmark &\cmark &\xmark &0 \csvEight{113} \\ \hline
\parbox[t]{5mm}{\multirow{6}{*}{\rotatebox[origin=c]{90}{\parbox{1.2cm}{\centering LSP}}}}
&3&\verytiny\multirowcell{2}{$X^{exo}_1, X^{exo}_2 = 1$\\ $\forall i \not\in \{1,2\}: X^{exo}_i = 0$}
&$X_1 = 1 \land X_2 = 1$ &\multirow{6}{*}{\parbox[t]{1.5mm}{\rotatebox[origin=c]{90}{$X_{41} = 1$}}} &\cmark &\cmark &\xmark &0 \csvEight{137} \\ \cline{2-2}\cline{4-4}\cline{6-12}
&56&\verytiny\multirowcell{4}{$X^{exo}_{1}, X^{exo}_{2},$\\$X^{exo}_{3}, X^{exo}_{11} = 1$\\ $\forall i \not\in \{1,2,3,11\}:$\\$X^{exo}_i = 0$}
&$X_{1} = 1 \land X_{2} = 1$ &&\cmark &\xmark &\cmark &-- \csvEight{145} \\ \cline{2-2}\cline{4-4}\cline{6-12}
&57&&$X_{1} = 1 \land X_{3} = 1$ &&\cmark &\cmark &\cmark &0 \csvEight{153} \\ \hline
\parbox[t]{8mm}{\multirow{6}{*}{\rotatebox[origin=c]{90}{\parbox{1cm}{\centering\verytiny BT \\ height = 12}}}}
&34 &\multirow{8}{*}{$\forall U \in \mathcal{U}: U = 1$}
&$n_{4093} = 1 \land n_{4094} = 1$\newline\newline &\multirow{6}{*}{\parbox[t]{1.5mm}{\rotatebox[origin=c]{90}{$n_{\text{root}} = 1$}}} &\cmark &\xmark &\cmark &-- \csvEight{209} \\ \cline{2-2}\cline{4-4}\cline{6-12}
&35&&$n_{4091} = 1 \land n_{4092} = 1$\newline$\land\: n_{4093} = 1 \land n_{4094} = 1$\newline &&\cmark &\xmark &\cmark &-- \csvEight{217} \\ \hline

\parbox[t]{8mm}{\multirow{6}{*}{\rotatebox[origin=c]{90}{\parbox{1.2cm}{\centering\verytiny ABT}}}}
&1&\verytiny\multirowcell{2}{\verytiny$B_{exo}, n^{exo}_{4094} = 1$\\ $\forall i \not\in \{4094\}: n^{exo}_i = 0$}
&$n_{4094} = 1$ &\multirow{6}{*}{\parbox[t]{1.5mm}{\rotatebox[origin=c]{90}{$I = 1$}}} &\cmark &\cmark &\cmark &4 \csvEight{233} \\ \cline{2-2}\cline{4-4}\cline{6-12}
&4&\multirowcell{4}{\verytiny$B_{exo}, n^{exo}_{4093}, n^{exo}_{4094} = 1$\\\verytiny $\forall i \not\in \{4093, 4094\}:$\\\verytiny $n^{exo}_i = 0$}
&$n_{4093} = 1 \land n_{4094} = 1$ &&\cmark &\cmark &\cmark &4 \csvEight{241} \\ \cline{2-2}\cline{4-4}\cline{6-12}
&5&&$n_{4092} = 0 \land n_{4093} = 1\land n_{4094} = 1$ &&\cmark &\cmark &\xmark &5 \csvEight{249} \\ \hline
\end{tabular}}
\end{table}}

\subsection{Discussion and Results}
In this section, we discuss some representative cases from our experiments. Table \ref{tab:evaluation:brute_sat} shows the details of these cases. The first two columns show the scenario identifier, namely, the \textit{name} of the model and the \textit{ID} of the scenario that  differs in the details of the causal query, i.e., $\vec{X}$ and $\varphi$. The latter are shown in the third and fourth columns. Then, the results of the three conditions are displayed in columns AC1-AC3. The size of the minimal W set is displayed in the next column. Finally, for each algorithm, the execution time and memory allocation are shown. We write N/A in cases where the computation was not completed in 5 minutes or consumed too much memory. As a general remark, it does not matter which algorithm is applied if AC2 holds for an empty $\vec{W}$ and AC3 holds automatically, i.e., $\vec{X}$ is a singleton.

As expected, the Brute-Force approach (BF) works only for smaller models (\textless 5 variables), or in situations where only a few iterations are required. Such as scenarios \textit{LSP-3, and SMK-29}. Specifically, we see these situations when AC2 holds with a small or empty $\vec{W}$, and AC3 does not hold. That is, the number of iterations BF performs is small because the sets, $\vec{W}_i$ are ordered by size. 
Such examples did not exhibit the major problem of BF, i.e., the generation of all possible sets, $\vec{W}_i$ whose number increases exponentially, and the iterations BF might, therefore, perform to check minimality in AC3.

For larger models ( \textgreater 30 variables), BF did not return an answer in 5 minutes, especially when AC2 does not hold. This is seen by the several N/A entries in Table \ref{tab:evaluation:brute_sat}. For example, in the SMK, the set of all possible $\vec{W}_i$ has a size of up to $2^{35}$. In the worst case, this number of iterations is required for finding out that AC2 does not hold. It is possible that this number of iterations multiplied by the number of subsets of the cause needs to be executed again to check AC3. This causes BF to be extremely slow and to consume a lot of memory. The SAT by contrast, always stays below 1.5 ms and allocates less than 1.5 MB during the execution for all scenarios of the SMK.  Even if larger models were considered, SAT handles them efficiently, e.g., \textit{BT 34 - 35}, where the underlying causal model contains 4095 variables, executed in $\le 7s$. However, the latter scenarios are special because AC2 does not hold. In \textit{ABT 1, 4 and 5}, we can see that even if AC2 \textit{does} hold and $\vec{W}$ is \textit{not} empty, SAT takes only $8s$.

 While obtaining a minimal $\vec{W}$ using our approach showed a rather small impact relative to the SAT approach in most of the scenarios, it showed a significant increase in some scenarios. This impact was highly dependent on the nature and semantics of the underlying causal model. That is, we can only observe a major impact if the number of satisfying assignments or the size of a non-minimal $\vec{W}$ is large as this will significantly extend the analysis. For instance, in \textit{SMK-3}, the execution time increased by about $22\%$. Nonetheless, there are scenarios in which we observed a significant increase, such as the \textit{ABT-4} scenario. Here, the non-minimal $\vec{W}$ contains more than 4000 elements, leading to an increase of more than 200\% in the execution time required to determine a minimal $\vec{W}$. 

Finally, combining the algorithms for AC2 and AC3 is only beneficial if AC2 and AC3 need to be explicitly analyzed (AC2 does not hold for an empty W, and the cause is not a singleton). We have many such scenarios in our examples. In evaluating them, we found out that there is a positive impact in using this optimization, but it is rather small on the average. Larger differences can be seen, for instance, in \textit{ABT-5} where the SAT-based approach executes for $7906 ms$ while the current optimization takes $3803ms$.

The main finding of our experiments is that actual causality can be computed efficiently with our SAT-based approach. Within binary models of \textit{4000} variables, we were able to obtain a correct answer for any query in less than $4$ seconds, using a memory of \textit{1 GB}. 
\section{Conclusions and Future Work}\label{sec:conc}
It is difficult to devise automated assistance for causality reasoning in modern socio-technical systems. Causality checking, according to the formal definitions, is computationally hard. Therefore, efficient approaches that scale to the complexity of such systems are required. In the course of this, we proposed an intelligent way to utilize SAT solvers to check actual causality in binary models in a large scale that we believe to be particularly relevant for accountability purposes. We empirically showed that it can efficiently compute actual causality in large binary models. Even with only 30 variables, determining causality in a brute force manner is incomputable, whereas our SAT-based approach returned a result for such cases in 1 ms. In addition, causal models consisting of more than 4000 endogenous variables were still handled within seconds using the proposed approach. 

For future work, we will consider other logic programming paradigms such as integer linear programming and answer set programming to develop our approach from checking to possibly inferring causality. Moreover, a thorough characterization of causal model classes that affect the efficiency of the proposed approach is a useful follow-up to this work. We plan to extend our benchmark with models of different patterns, and  different cardinalities of causes.

%
%
%
\newpage
 \bibliographystyle{splncs04}
 \bibliography{bibliography}
 \newpage
 \appendix
 \section{Appendix: Lemma Proof} \label{sec:app1}
\begin{lemma}\label{lemma:negation2}
	In a binary model, if $\vec{X}=\vec{x}$ is a cause of $\varphi$, according to HP \cite{halpern2015modification} definition, then every $\vec{x}'$ in the definition of AC2 always satisfies $\forall i\bullet x_i'=\neg x_i$.
\end{lemma}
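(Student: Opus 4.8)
The plan is to proceed by contradiction, playing the alternative setting $\vec{x}'$ of AC2 against the minimality clause AC3. Since the model is binary, each coordinate satisfies $x_i' \in \{x_i, \neg x_i\}$ automatically, so the substance of the lemma is that no coordinate is left unflipped. Write $I = \{\, i : x_i' \neq x_i \,\}$ for the set of genuinely flipped coordinates; the goal is to show $I = \{1,\dots,\ell\}$, and I will assume $I$ is a proper subset and derive a contradiction.

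A preliminary fact I would record first is a determinism lemma for acyclic models: if $\vec{Z}\subseteq V$ and $\vec{z}$ are the values $\vec{Z}$ takes in the unique solution of $(M,\vec{u})$, then $(M_{\vec{Z}\leftarrow\vec{z}},\vec{u})$ has that same unique solution, hence $(M_{\vec{Z}\leftarrow\vec{z}},\vec{u})\models\psi \iff (M,\vec{u})\models\psi$ for every Boolean combination $\psi$ of primitive events. This holds because the original solution still satisfies all equations of $M_{\vec{Z}\leftarrow\vec{z}}$ and acyclicity forces uniqueness; in words, an intervention that merely re-imposes actual values changes nothing, so it cannot turn $\varphi$ into $\neg\varphi$.

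The core construction takes a witness $(\vec{W},\vec{x}')$ for AC2 of $\vec{X}=\vec{x}$ with $I$ a proper subset of $\{1,\dots,\ell\}$ and shows that the subcause $\vec{X}_I=\vec{x}_I$ (the flipped coordinates carrying their \emph{original} values) satisfies AC1 and AC2. AC1 is immediate since $\vec{X}_I\subseteq\vec{X}$ and $(M,\vec{u})\models\vec{X}=\vec{x}\wedge\varphi$. For AC2 I would use the witness set $\vec{W}_I := \vec{W}\cup\{X_i : i\notin I\}$, pinning the unflipped cause variables to their actual values $x_i$, together with the setting $(x_i')_{i\in I}$. The point is that, since $x_i'=x_i$ for $i\notin I$, the compound intervention $[\vec{X}_I\leftarrow(x_i')_{i\in I},\;\vec{W}_I\leftarrow\vec{w}_I]$ is literally the same model modification as $[\vec{X}\leftarrow\vec{x}',\;\vec{W}\leftarrow\vec{w}]$, so its $\neg\varphi$ conclusion transfers verbatim from AC2 for $\vec{X}$ (using AC1 to see that the enlarged hypothesis $(M,\vec{u})\models\vec{W}_I=\vec{w}_I$ still holds). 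Now if $I\neq\emptyset$ then $\vec{X}_I$ is a nonempty proper subset of $\vec{X}$ satisfying AC1 and AC2, contradicting AC3. If instead $I=\emptyset$ then $\vec{X}_I=\emptyset$ and the same witness reads $[\vec{W}_I\leftarrow\vec{w}_I]\neg\varphi$ with every variable pinned to its actual value, which the determinism lemma turns into $\varphi$, contradicting AC1. Either way we are done, so $I=\{1,\dots,\ell\}$, i.e.\ $x_i'=\neg x_i$ for every $i$.

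The main obstacle I anticipate is bookkeeping around the passage from $\vec{W}$ to $\vec{W}_I := \vec{W}\cup\{X_i:i\notin I\}$: one has to verify it is a legitimate AC2 witness set (in particular disjoint from $\vec{X}_I$, under the standard convention that $\vec{W}$ and $\vec{X}$ are disjoint), that its actual-value tuple $\vec{w}_I$ genuinely satisfies $(M,\vec{u})\models\vec{W}_I=\vec{w}_I$ (here AC1 supplies $X_i=x_i$ for $i\notin I$), and that the two compound interventions really denote the same operation on $M$. The only conceptual subtlety is the degenerate branch $I=\emptyset$ — notably when $\vec{X}$ is a singleton with $x_1'=x_1$ — which is precisely where the determinism lemma, rather than AC3, closes the argument; no genuinely hard step is expected beyond this.
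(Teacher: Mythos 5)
Your proof is correct, and its overall strategy coincides with the paper's: assume some coordinate of $\vec{x}'$ is left unflipped and contradict the minimality clause AC3 by exhibiting a strictly smaller cause built from the flipped coordinates. The difference lies in how the unflipped coordinates are disposed of, and your choice is the more robust one. The paper, after WLOG taking $x_n'=x_n$, passes to the candidate cause $\vec{X}_{(n-1)}$ and in its counterfactual formula ($E_4$) simply \emph{reinstates the structural equation} for $X_n$, then asserts that $\neg\varphi$ still follows ``by equivalence transformations.'' That step is delicate: once $\vec{X}_{(n-1)}$ and $\vec{W}$ are overridden, the equation $F_{X_n}$ need not evaluate to $x_n$ in the counterfactual world, so the transfer of $\neg\varphi$ is not purely propositional. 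You instead move each unflipped $X_i$ into the witness set, $\vec{W}_I:=\vec{W}\cup\{X_i: i\notin I\}$, pinned at its actual value $x_i$ (legitimate because AC1 gives $(M,\vec{u})\models X_i=x_i$, so the AC2 side condition on $\vec{W}_I$ holds). The two compound interventions are then literally the same model modification, so $\neg\varphi$ transfers verbatim and no equivalence manipulation is needed. You also handle the degenerate branch $I=\emptyset$ explicitly (including the singleton case $x_1'=x_1$), where the contradiction comes not from AC3 but from your determinism observation --- re-imposing actual values in an acyclic model preserves the unique solution, so $\neg\varphi$ would contradict AC1 --- a corner the paper's proof passes over silently. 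In short: same skeleton, but your bookkeeping closes the two places where the paper's argument is loosest.
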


\begin{proof}
	We use the following notation: $\ora{X}_{(n)}$ stands for a vector of length $n$, $X_1,\ldots,X_n$; and $\ora{X}_{(n)}=\ora{x}_{(n)}$ stands for $X_1=x_1,\ldots,X_n=x_n$. Let $\ora{X}_{(n)}=\ora{x}_{(n)}$ be a cause for $\varphi$ in some model $M$.
	
	\begin{enumerate}
		\item AC1 yields 
		\begin{equation}\label{eq:origAC1}
			(M,\ora{u})\models (\ora{X}_{(n)}=\ora{x}_{(n)}) \wedge (M,\ora{u})\models\varphi.
		\end{equation}
		\item Assume that the lemma does not hold. Then there is some index $k$ such that $x_k'=x_k$ and AC2 holds. Because we are free to choose the ordering of the variables, let us set $k=n$ wlog.
		We may then rewrite AC2 as follows: 
		\begin{multline}\label{eq:AC2forLemma}
			\exists \ora{W}, \ora{w}, \ora{x}_{(n)}'\bullet  (M,\ora{u})\models(\ora{W}=\ora{w})
			\implies (M,\ora{u})\models \\\bigl[\ora{X}_{(n-1)}\leftarrow \ora{x}_{(n-1)}',X_n\leftarrow x_n,\ora{W}\leftarrow\ora{w}\bigr]\neg\varphi.
		\end{multline}
		
		\item We will show that equations \ref{eq:origAC1} and \ref{eq:AC2forLemma} give rise to a smaller cause, namely $\ora{X}_{(n-1)}= \ora{x}_{(n-1)}$, contradicting the minimality requirement AC3. We need to show that the smaller cause $\ora{X}_{(n-1)}= \ora{x}_{(n-1)}$ satisfy AC1 and AC2, as stated by equations \ref{eq:newAC1} and \ref{eq:AC2Cons} below.
		This violates the minimality requirement of AC3 for $\ora{X}_{(n)}=\ora{x}_{(n)}$.
		
		\begin{equation}
			\label{eq:newAC1}
			(M,\ora{u})\models (\ora{X}_{(n-1)}=\ora{x}_{(n-1)}) \wedge (M,\ora{u})\models\varphi
		\end{equation}
		
		states AC1 for a candidate ``smaller'' cause $\ora{X}_{(n-1)}$. Similarly,
		
		\begin{multline}	\label{eq:AC2Cons}
			\exists \ora{W}^*,\ora{w}^*,\ora{x}_{(n-1)}'^*\bullet(M,\ora{u})\models(\ora{W}^*=\ora{w}^*)\\
			\implies (M,\ora{u})\models\bigl[\ora{X}_{(n-1)}\leftarrow\ora{x}_{(n-1)}'^*,\ora{W}^*\leftarrow\ora{w}^*\bigr]\neg\varphi
		\end{multline}
		formulates AC2 for this candidate smaller cause $\ora{X}_{(n-1)}$.
		\item Let $\Psi$ denote the structural equations that define $M$. Let $\Psi'$ be $\Psi$ without the equations that define the variables $\ora{X}_{(n)}$ and $\ora{W}$; and let $\Psi''$ be $\Psi$ without the equations that define the variables $\ora{X}_{(n-1)}$ and $\ora{W}$. Clearly, $\Psi''\implies\Psi'$.
		
		We can turn equation \ref{eq:origAC1} into a propositional formula, namely
		\begin{equation}\label{eq:1}
			E_1:=\bigl(\Psi\wedge \ora{X}_{(n-1)}=\ora{x}_{(n-1)}\wedge X_n=x_n\bigr) \wedge \varphi.
		\end{equation}
		
		Similarly, equation \ref{eq:newAC1} is reformulated as
		\begin{equation}\label{eq:2}
			E_2:=\bigl(\Psi\wedge \ora{X}_{(n-1)}=\ora{x}_{(n-1)}\bigr) \wedge \varphi.
		\end{equation}
		
		Because equation~\ref{eq:AC2forLemma} holds, we fix some $\ora{W},\ora{w},\ora{x}_{(n)}'$ that make it true and rewrite this equation as
		\begin{equation}\label{eq:3}
			E_3:=\bigl(\Psi'\wedge\ora{X}_{(n-1)}=\ora{x}_{(n-1)}'\wedge X_n=x_n\wedge \ora{W}=\ora{w}\bigr)\implies \neg\varphi.
		\end{equation}

		Finally, in equation \ref{eq:AC2Cons}, we use exactly these values to also fix $\ora{W}^*=\ora{W}$, $\ora{w}^*=\ora{w}$, and $\ora{x}_{(n-1)}'^*=\ora{x}_{(n-1)}'$, and reformulate this equation as
		\begin{equation}\label{eq:4}
			E_4:=\bigl(\Psi''\wedge\ora{X}_{(n-1)}=\ora{x}_{(n-1)}'\wedge \ora{W}=\ora{w}\bigr)\implies \neg\varphi.
		\end{equation}
		
		It is then a matter of equivalence transformations to show that
		\begin{equation}
			(\Psi''\implies\Psi') \implies \bigl((E_1\wedge E2)\implies(E_3\wedge E_4)\bigr)
		\end{equation}
		is a tautology, which proves the lemma.
	\end{enumerate}
\end{proof}
\end{document}